\theoremstyle{plain}
\newtheorem{theorem}{Theorem}
\newtheorem{lemma}{Lemma}
\theoremstyle{definition}
\theoremstyle{remark}
\date{}
\begin{document}

\title{Greedy Algorithm for Inference of Decision Trees from
Decision Rule Systems}

\author{Kerven Durdymyradov and Mikhail Moshkov \\
Computer, Electrical and Mathematical Sciences \& Engineering Division \\ and Computational Bioscience Research Center\\
King Abdullah University of Science and Technology (KAUST) \\
Thuwal 23955-6900, Saudi Arabia\\ \{kerven.durdymyradov,mikhail.moshkov\}@kaust.edu.sa
}

\maketitle

\begin{abstract}
Decision trees and decision rule systems play important roles as classifiers, knowledge representation tools, and algorithms. They are easily interpretable models for data analysis, making them widely used and studied in computer science. Understanding the relationships between these two models is an important task in this field.
There are well-known methods for converting decision trees into systems of decision rules. In this paper, we consider the inverse transformation problem, which is not so simple. Instead of constructing an entire decision tree, our study focuses on a greedy polynomial time algorithm that simulates the operation of a decision tree on a given tuple of attribute values.
\end{abstract}

{\it Keywords}: decision rule systems, decision trees.

\section{Introduction\label{S1}}

Decision trees \cite{AbouEishaACHM19,AlsolamiACM20,BreimanFOS84,Moshkov05,MoshkovZ11,RokachM07} and systems of decision rules \cite{BorosHIK97,BorosHIKMM00,ChikalovLLMNSZ13,FurnkranzGL12,MPZ08,MoshkovZ11,Pawlak91,PawlakS07} are widely used as classifiers, knowledge representation tools, and algorithms. They are known for their interpretability in data analysis \cite{CaoSJ20,GilmoreEH21,Molnar22,SilvaGKJS20}.

Investigating the relationship between these two models is an important task in computer science. Converting decision trees into decision rule systems is a well-known and simple process \cite{Quinlan87,Quinlan93,Quinlan99}. This paper focuses on the inverse transformation problem, which is not trivial.

The research related to this problem encompasses several  directions:

\begin{itemize}
\item Two-stage construction of decision trees. This approach involves building decision rules based on input data, followed by the construction of decision trees or decision structures (which are generalizations of decision trees) using the generated rules. The benefits of this two-stage construction method are explained in \cite%
{AbdelhalimTN16,AbdelhalimTS09,ImamM93a,ImamM93,ImamM96,KaufmanMPW06,MichalskiI94,MichalskiI97,SzydloSM05}.

\item Relationships between the depth of deterministic and nondeterministic decision trees for computing Boolean functions \cite{BlumI87,HartmanisH87,Moshkov95,Tardos89}. Note that the nondeterministic decision trees can be interpreted as decision rule systems. Note also that the minimum depth of a nondeterministic decision tree for a Boolean function is equal to its certificate complexity \cite{BuhrmanW02}.

\item Relationships between the depth of deterministic and nondeterministic decision trees for problems over finite and infinite information systems \cite{Moshkov96,Moshkov00,Moshkov03,Moshkov05a,Moshkov20}. These systems consist of a universe and a set of attributes defined on it
\cite{Pawlak91}.
\end{itemize}

This paper builds upon the syntactic approach proposed in previous works \cite{Moshkov98,Moshkov01}. The approach assumes that we have a system of decision rules but lack knowledge of the input data, and our objective is to transform these rules into a decision tree.

Let us consider a system of decision rules $S$, which consists of rules of the form
\begin{equation*}
(a_{i_{1}}=\delta_{1})\wedge \cdots \wedge (a_{i_{m}}=\delta_{m})\rightarrow \sigma ,
\end{equation*}
where $a_{i_{1}},\ldots ,a_{i_{m}}$ represent attributes, $\delta _{1},\ldots
,\delta _{m}$ are the corresponding attribute values, and $\sigma$ is the decision.

The problem associated with this system is to determine, for a given input (a tuple of values of attributes included in $S$), all the realizable rules, i.e., rules with a true left-hand side. It is important to note that any attribute in the input can take any value.

The objective of this paper is to minimize the number of queries required to determine the attribute values. To achieve this, decision trees are explored as algorithms for solving the problem at hand.

In our previous work \cite{Kerven23}, we investigated the minimum depth of decision trees for the considered problem and established both upper and lower bounds. These bounds depend on three parameters of the decision rule system: the total number of distinct attributes, the maximum length of a decision rule, and the maximum number of attribute values. We demonstrated that there exist systems of decision rules where the minimum depth of the decision trees is significantly smaller than the total number of attributes in the rule system. This finding shows that decision trees are a reasonable choice for such systems of decision rules.

In another study \cite{Kerven23a}, we examined the complexity of constructing decision trees and acyclic decision graphs that represent decision trees.  We found that in many cases, the minimum number of nodes in decision trees can grow as a superpolynomial function depending on the size of the decision rule systems. To address this issue, we introduced two types of acyclic decision graphs as representations of decision trees. However, simultaneously minimizing the depth and the number of nodes in these graphs poses a challenging bi-criteria optimization problem.

We left this problem for future research and pursued an alternative approach: instead of constructing the entire decision tree, we developed a polynomial time algorithm that models the behavior of the decision tree for a given tuple of attribute values. This algorithm is based on an auxiliary algorithm for the construction of a node cover for a hypergraph corresponding to a decision rule system: nodes of this hypergraph correspond to attributes and edges -- to rules from the rule system. The auxiliary algorithm is not greedy: at each step, this algorithm adds to the cover being constructed all the attributes belonging to a rule that has not yet been covered.

In this paper, we develop a new algorithm with polynomial time complexity, which models the behavior of a decision tree solving the considered problem for a given tuple of attribute values. The auxiliary algorithm for it is a standard greedy algorithm for the set cover problem. Therefore we talk about the entire algorithm for describing the operation of decision trees as greedy. We study the accuracy of this algorithm, i.e., we compare the depth of the decision tree described by it and the minimum depth of a decision tree. The obtained bound is a bit worse in the comparison with the bound  for the algorithm considered in \cite{Kerven23a}. However, we expect that the considered algorithms are mutually complementary: the old one will work better for systems with short decision rules and the new one will work better for systems with long decision rules. In the future, we are planning to do computer experiments to explore this hypothesis.

In this paper, we repeat the main definitions from \cite{Kerven23} and give
some lemmas from \cite{Kerven23} without proofs.

This paper consists of five sections. Section \ref{S2} considers the main
definitions and notation. Section \ref{S3} contains auxiliary statements.
Section \ref{S4} discusses the greedy algorithm, which models the behavior of a decision tree. Section \ref{S5} contains short conclusions.

\section{Main Definitions and Notation\label{S2}}

In this section, we consider the main definitions and notation related to
decision rule systems and decision trees. In fact, we repeat corresponding
definitions and notation from \cite{Kerven23}.

\subsection{Decision Rule Systems\label{S2.1}}

Let $\omega =\{0,1,2,\ldots \}$ and $A=\{a_{i}:i\in \omega \}$. Elements of
the set $A$ will be called \emph{attributes}.

A \emph{decision rule} is an expression of the form
\begin{equation*}
(a_{i_{1}}=\delta _{1})\wedge \cdots \wedge (a_{i_{m}}=\delta
_{m})\rightarrow \sigma ,
\end{equation*}%
where $m\in \omega $, $a_{i_{1}},\ldots ,a_{i_{m}}$ are pairwise different
attributes from $A$ and $\delta _{1},\ldots ,\delta _{m},$ $\sigma \in \omega $.

We denote this decision rule by $r$. The expression $(a_{i_{1}}=\delta
_{1})\wedge \cdots \wedge (a_{i_{m}}=\delta _{m})$ will be called the \emph{%
left-hand side}, and the number $\sigma $ will be called the \emph{%
right-hand side} of the rule $r$. The number $m$ will be called the \emph{%
length }of the decision rule $r$. Denote $A(r)=\{a_{i_{1}},\ldots
,a_{i_{m}}\}$ and $K(r)=\{a_{i_{1}}=\delta _{1},\ldots ,$ $a_{i_{m}}=\delta
_{m}\}$. If $m=0$, then $A(r)=K(r)=\emptyset $.

A \emph{system of decision rules} $S$ is a finite nonempty set of decision
rules. Denote $A(S)=\bigcup_{r\in S}A(r)$, $n(S)=\left\vert A(S)\right\vert $%
,  and $%
d(S)$ the maximum length of a decision rule from $S$. Let $n(S)>0$. For $%
a_{i}\in A(S)$, let $V_{S}(a_{i})=\{\delta :a_{i}=\delta \in \bigcup_{r\in
S}K(r)\}$ and $EV_{S}(a_{i})=V_{S}(a_{i})\cup \{\ast \}$, where the symbol $%
\ast $ is interpreted as a number from $\omega $ that does not belong to the
set $V_{S}(a_{i})$. Letter $E$ here and later means \emph{extended}: we
consider not only values of attributes occurring in $S$ but arbitrary values
from $\omega $. Denote $k(S)=\max \{\left\vert V_{S}(a_{i})\right\vert
:a_{i}\in A(S)\}$. If $n(S)=0$, then $k(S)=0$. We denote by $\Sigma $ the
set of systems of decision rules.

Let $S\in \Sigma $, $n(S)>0$, and $A(S)=\{a_{j_{1}},\ldots ,a_{j_{n}}\}$,
where $j_{1}<\cdots <j_{n}$. Denote  $EV(S)=EV_{S}(a_{j_{1}})\times \cdots
\times EV_{S}(a_{j_{n}})$. For $\bar{\delta}=(\delta _{1},\ldots ,\delta
_{n})\in EV(S)$, denote $K(S,\bar{\delta})=\{a_{j_{1}}=\delta _{1},\ldots
,a_{j_{n}}=\delta _{n}\}$. We will say that a decision rule $r$ from $S$ is
\emph{realizable} for a tuple $\bar{\delta}\in EV(S)$ if $K(r)\subseteq K(S,%
\bar{\delta})$. It is clear that any rule with the empty left-hand side is
realizable for the tuple $\bar{\delta}$.

We now define a problem related to the rule system $S$.

Problem \emph{Extended All Rules}: for a given tuple $\bar{\delta}\in EV(S)$%
, it is required to find the set of rules from $S$ that are realizable for
the tuple $\bar{\delta}$. We denote this problem $EAR(S)$. In the special
case, when $n(S)=0$, all rules from $S$ have the empty left-hand side. In
this case, it is natural to consider the set $S$ as the solution to the
problem $EAR(S)$.

\subsection{Decision Trees\label{S2.2}}

A \emph{finite directed tree with root} is a finite directed tree in which
only one node has no entering edges. This node is called the \emph{root}.
The nodes without leaving edges are called \emph{terminal} nodes. The nodes
that are not terminal will be called \emph{working} nodes. A \emph{complete
path} in a finite directed tree with root is a sequence $\xi
=v_{1},d_{1},\ldots ,v_{m},d_{m},v_{m+1}$ of nodes and edges of this tree in
which $v_{1}$ is the root, $v_{m+1}$ is a terminal node and, for $i=1,\ldots
,m$, the edge $d_{i}$ leaves the node $v_{i}$ and enters the node $v_{i+1}$.

An \emph{extended decision tree over a decision rule system} $S$ is a
labeled finite directed tree with root $\Gamma $ satisfying the following
conditions:

\begin{itemize}
\item Each working node of the tree $\Gamma $ is labeled with an attribute
from the set $A(S)$.

\item Let a working node $v$ of the tree $\Gamma $ be labeled with an
attribute $a_{i}$. Then exactly $\left\vert EV_{S}(a_{i})\right\vert $ edges
leave the node $v$ and these edges are labeled with pairwise different
elements from the set $EV_{S}(a_{i})$.

\item Each terminal node of the tree $\Gamma $ is labeled with a subset of
the set $S$.
\end{itemize}

Let $\Gamma $ be an extended decision tree over the decision rule system $S$%
. We denote by $CP(\Gamma )$ the set of complete paths in the tree $\Gamma $%
. Let $\xi =v_{1},d_{1},\ldots ,v_{m},d_{m},v_{m+1}$ be a complete path in $%
\Gamma $. We correspond to this path a set of attributes $A(\xi )$ and an
equation system $K(\xi )$. If $m=0$ and $\xi =v_{1}$, then $A(\xi
)=\emptyset $ and $K(\xi )=\emptyset $. Let $m>0$ and, for $j=1,\ldots ,m$,
the node $v_{j}$ be labeled with the attribute $a_{i_{j}}$ and the edge $%
d_{j}$ be labeled with the element $\delta _{j}\in \omega \cup \{\ast \}$.
Then $A(\xi )=\{a_{i_{1}},\ldots ,a_{i_{m}}\}$ and $K(\xi
)=\{a_{i_{1}}=\delta _{1},\ldots ,a_{i_{m}}=\delta _{m}\}$. We denote by $%
\tau (\xi )$ the set of decision rules attached to the node $v_{m+1}$.

A system of equations $\{a_{i_{1}}=\delta _{1},\ldots ,a_{i_{m}}=\delta
_{m}\}$, where $a_{i_{1}},\ldots ,a_{i_{m}}\in A$ and $\delta _{1},\ldots
,\delta _{m}\in \omega \cup \{\ast \}$, will be called \emph{inconsistent}
if there exist $l,k\in \{1,\ldots ,m\}$ such that $l\neq k$, $i_{l}=i_{k}$,
and $\delta _{l}\neq \delta _{k}$. If the system of equations is not
inconsistent, then it will be called \emph{consistent}.

Let $S$ be a decision rule system and $\Gamma $ be an extended decision tree
over $S$.

We will say that $\Gamma $ \emph{solves} the problem  $EAR(S)$ if any path $%
\xi \in CP(\Gamma )$ with consistent system of equations $K(\xi )$ satisfies
the following conditions:

\begin{itemize}
\item For any decision rule $r\in \tau (\xi )$, the relation $K(r)\subseteq
K(\xi )$ holds.

\item For any decision rule $r\in S\setminus \tau (\xi )$, the system of
equations $K(r)\cup K(\xi )$ is inconsistent.
\end{itemize}

For any complete path $\xi \in CP(\Gamma )$, we denote by $h(\xi )$ the
number of working nodes in $\xi $. The value $h(\Gamma )=\max \{h(\xi ):\xi
\in CP(\Gamma )\}$ is called the \emph{depth} of the decision tree $\Gamma $.

Let $S$ be a decision rule system. We denote by $h_{EAR}(S)$ the minimum
depth of a decision tree over $S$, which solves the problem $EAR(S)$.

If $n(S)=0$, then there is only one decision tree solving the problem $EAR(S)
$. This tree consists of one node labeled with the set of rules $S$.
Therefore if $n(S)=0$, then $h_{EAR}(S)=0$.

\section{Auxiliary Statements\label{S3}}

In this section, we first give some statements from \cite{Kerven23} and
then we prove a new one.

Let $S$ be a decision rule system and $\alpha =\{a_{i_{1}}=\delta
_{1},\ldots ,a_{i_{m}}=\delta _{m}\}$ be a consistent equation system such
that $a_{i_{1}},\ldots ,a_{i_{m}}\in A$ and $\delta _{1},\ldots ,\delta
_{m}\in \omega \cup \{\ast \}$. We now define a decision rule system $%
S_{\alpha }$. Let $r$ be a decision rule for which the equation system $%
K(r)\cup \alpha $ is consistent. We denote by $r_{\alpha }$ the decision
rule obtained from $r$ by the removal from the left-hand side of $r$ all
equations that belong to $\alpha $. Then $S_{\alpha }$ is the set of
decision rules $r_{\alpha }$ such that $r\in S$ and the equation system $%
K(r)\cup \alpha $ is consistent.

\begin{lemma}
\label{L1} (follows from Lemma 6 \cite{Kerven23}) Let $S$ be a decision rule
system with $n(S)>0$,  $\alpha =\{a_{i_{1}}=\delta _{1},\ldots
,a_{i_{m}}=\delta _{m}\}$ be a consistent equation system such that $%
a_{i_{1}},\ldots ,a_{i_{m}}\in A(S)$ and, for $j=1,\ldots ,m$, $\delta
_{j}\in EV_{S}(a_{i_{j}})$. Then $h_{EAR}(S)\geq h_{EAR}(S_{\alpha })$.
\end{lemma}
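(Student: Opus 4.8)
The plan is to take a decision tree $\Gamma$ over $S$ that solves $EAR(S)$ with $h(\Gamma)=h_{EAR}(S)$ and to transform it, by a top-down restriction that ``freezes'' the equations of $\alpha$, into a decision tree $\Gamma_\alpha$ over $S_\alpha$ that solves $EAR(S_\alpha)$ and satisfies $h(\Gamma_\alpha)\le h(\Gamma)$. Since $h_{EAR}(S_\alpha)\le h(\Gamma_\alpha)$, this yields the claim. A useful preliminary observation is that no attribute $a_{i_j}$ occurring in $\alpha$ can occur in $S_\alpha$ (consistency with $\alpha$ forces any equation on $a_{i_j}$ to equal $a_{i_j}=\delta_j$, which is then deleted), and that for every $a\in A(S_\alpha)$ one has $V_{S_\alpha}(a)\subseteq V_S(a)$; hence $A(S_\alpha)\subseteq A(S)\setminus\{a_{i_1},\dots,a_{i_m}\}$ and $EV_{S_\alpha}(a)\setminus\{\ast\}\subseteq EV_S(a)\setminus\{\ast\}$.

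The construction of $\Gamma_\alpha$ proceeds recursively from the root of $\Gamma$. At a working node labeled by an attribute $a_{i_j}$ of $\alpha$, I would delete the node and keep only the subtree reached along the edge labeled $\delta_j$ (it exists because $\delta_j\in EV_S(a_{i_j})$). At a working node labeled by an attribute $a\in A(S_\alpha)$, I keep the node but relabel its outgoing edges by $EV_{S_\alpha}(a)$: the edge $c$ with $c\in V_{S_\alpha}(a)$ inherits the corresponding subtree of $\Gamma$ (available since $c\in V_S(a)$), and the edge $\ast$ inherits the $\ast$-subtree of $\Gamma$; the subtrees hanging on values of $V_S(a)\setminus V_{S_\alpha}(a)$ are discarded. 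Finally, at a working node labeled by an attribute $a\in A(S)\setminus A(S_\alpha)$ that does not occur in $\alpha$ (a ``dead'' attribute, the degenerate case $V_{S_\alpha}(a)=\emptyset$ of the previous rule), I delete the node and keep its $\ast$-subtree. Each terminal node with label $\tau$ is relabeled by $\{r_\alpha : r\in\tau,\ K(r)\cup\alpha\text{ consistent}\}$. Because every step only deletes working nodes or relabels edges, the resulting $\Gamma_\alpha$ is a valid extended decision tree over $S_\alpha$ and $h(\Gamma_\alpha)\le h(\Gamma)$.

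It remains to verify that $\Gamma_\alpha$ solves $EAR(S_\alpha)$, and this is where the real work lies. Each complete path $\xi'$ of $\Gamma_\alpha$ with consistent $K(\xi')$ arises from a complete path $\xi$ of $\Gamma$ along which I would show $K(\xi)=\alpha\cup K(\xi')$; this system is consistent since $\alpha$ and $K(\xi')$ use disjoint attributes. Applying the two solving conditions of $\Gamma$ to $\xi$ and setting $\tau(\xi')=\{r_\alpha : r\in\tau(\xi)\}$, the inclusion condition $K(r_\alpha)\subseteq K(\xi')$ follows directly from $K(r)\subseteq\alpha\cup K(\xi')$ by deleting the equations of $\alpha$. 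The main obstacle is the inconsistency condition: given $r_\alpha\in S_\alpha\setminus\tau(\xi')$, I would pick a preimage $r\in S\setminus\tau(\xi)$ with $K(r)\cup\alpha$ consistent and use inconsistency of $K(r)\cup K(\xi)=K(r_\alpha)\cup\alpha\cup K(\xi')$; since $K(r)\cup\alpha$ is consistent and $\alpha\cup K(\xi')$ is consistent, any conflicting pair of equations must lie one in $K(r_\alpha)$ and one in $K(\xi')$, so $K(r_\alpha)\cup K(\xi')$ is already inconsistent, as required. Beyond this argument, the only delicate bookkeeping is the value-set mismatch handled above by the merge-into-$\ast$ device; reducing to the single-equation case $m=1$ and iterating would streamline the collapse of $\alpha$-nodes but does not remove this bookkeeping.
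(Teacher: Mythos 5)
The paper does not actually prove this lemma: it is imported from Lemma 6 of \cite{Kerven23} without argument, so there is no in-paper proof to compare against, and your self-contained construction is the natural one. It is essentially correct: restricting an optimal tree $\Gamma$ for $EAR(S)$ by collapsing each $\alpha$-node onto its $\delta_j$-edge, collapsing each dead node onto its $\ast$-edge, pruning the edge labels of surviving nodes down to $EV_{S_\alpha}(a)$, and relabelling each terminal node by $\{r_\alpha : r\in\tau(\xi),\ K(r)\cup\alpha \text{ consistent}\}$ does produce an extended decision tree over $S_\alpha$ of depth at most $h(\Gamma)$ that solves $EAR(S_\alpha)$. One equality in your verification is stated too strongly: $K(\xi)=\alpha\cup K(\xi')$ fails in general, because $K(\xi)$ omits the equations of $\alpha$ whose attributes the path never visits and contains extra equations $a=\ast$ for the dead attributes you collapsed. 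What you actually need, and what your construction does deliver, is weaker and sufficient: the equations of $K(\xi)$ on attributes of $A(S_\alpha)$ are exactly $K(\xi')$; its equations on $a_{i_1},\dots,a_{i_m}$ all lie in $\alpha$; and its remaining equations are $\ast$-equations on attributes occurring neither in $S_\alpha$ nor in $\alpha$, hence on attributes absent from $K(r)$ for every $r$ with $r_\alpha\in S_\alpha$. With that restatement both solving conditions transfer exactly as you argue: $K(r)\subseteq K(\xi)$ forces $K(r_\alpha)\subseteq K(\xi')$, and a conflicting pair inside the inconsistent system $K(r)\cup K(\xi)$ can only pair an equation of $K(r_\alpha)$ with one of $K(\xi')$, since every other pair of groups is either attribute-disjoint or jointly consistent. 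I would also add a sentence disposing of the degenerate cases $S_\alpha=\emptyset$ and $n(S_\alpha)=0$, where $h_{EAR}(S_\alpha)=0$ and the inequality is trivial.
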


We correspond to a decision rule system $S$ a hypergraph $G(S)$ with the set
of nodes $A(S)$ and the set of edges $\{A(r):r\in S\}$. A \emph{node cover}
of the hypergraph $G(S)$ is a subset $B$ of the set of nodes $A(S)$ such
that $A(r)\cap B\neq \emptyset $ for any rule $r\in S$ such that $A(r)\neq
\emptyset $. If $A(S)=\emptyset $, then the empty set is the only node cover
of the hypergraph $G(S)$. Denote by $\beta (S)$ the minimum cardinality of a
node cover of the hypergraph $G(S)$.

\begin{lemma}
\label{L2} (follows from Lemma 7 \cite{Kerven23}) Let $S$ be a decision rule
system. Then $h_{EAR}(S)\geq \beta (S)$.
\end{lemma}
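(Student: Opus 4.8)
The plan is to show that any decision tree $\Gamma$ solving $EAR(S)$ must have depth at least $\beta(S)$ by extracting, from a carefully chosen complete path, a set of attributes that forms a node cover of the hypergraph $G(S)$. Since $h(\Gamma)$ counts working nodes along paths and $\beta(S)$ is the minimum node cover size, producing a cover of size at most $h(\Gamma)$ from some path will immediately give the bound.

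First I would handle the degenerate case $n(S)=0$ separately: here $A(S)=\emptyset$, so the empty set is the unique node cover and $\beta(S)=0$, while $h_{EAR}(S)=0$ as noted in the excerpt, so the inequality holds trivially. Assuming $n(S)>0$, let $\Gamma$ be a decision tree solving $EAR(S)$ with $h(\Gamma)=h_{EAR}(S)$. The key construction is to trace a complete path from the root by following, at each working node labeled $a_i$, the edge labeled with the value $\ast$ (the ``fresh'' value outside $V_S(a_i)$). This is always an available choice because every working node labeled $a_i$ has an outgoing edge for each element of $EV_S(a_i)$, and $\ast \in EV_S(a_i)$ by definition. Let $\xi$ be the resulting complete path, with attribute set $A(\xi)$ and equation system $K(\xi)$; note $K(\xi)$ assigns $\ast$ to every attribute it mentions, hence it is consistent.

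The heart of the argument is to verify that $A(\xi)$ is a node cover of $G(S)$. Take any rule $r\in S$ with $A(r)\neq\emptyset$, and suppose for contradiction that $A(r)\cap A(\xi)=\emptyset$. Since $\Gamma$ solves $EAR(S)$ and $K(\xi)$ is consistent, each $r\in S$ falls into one of the two cases from the solving definition. If $r\in\tau(\xi)$, then $K(r)\subseteq K(\xi)$, which forces $A(r)\subseteq A(\xi)$, contradicting both $A(r)\neq\emptyset$ and disjointness. If $r\in S\setminus\tau(\xi)$, then $K(r)\cup K(\xi)$ must be inconsistent; but an inconsistency requires a shared attribute assigned conflicting values, which in turn forces $A(r)\cap A(\xi)\neq\emptyset$, again contradicting disjointness. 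Either way we reach a contradiction, so $A(r)\cap A(\xi)\neq\emptyset$ for every rule $r$ with nonempty $A(r)$, establishing that $A(\xi)$ is a node cover.

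Finally, the number of working nodes in $\xi$ is exactly $|A(\xi)|$ (the attributes labeling the working nodes along $\xi$ are pairwise distinct along a path with consistent $K(\xi)$, since repeating an attribute with the value $\ast$ both times is fine, but the definition of $A(\xi)$ already records them as a set), and this count is at most $h(\Gamma)$. Hence $\beta(S)\le |A(\xi)|\le h(\Gamma)=h_{EAR}(S)$, which is the desired inequality. The main obstacle I anticipate is the contradiction step, specifically confirming that the ``solving'' conditions genuinely force every rule to touch $A(\xi)$; the subtlety is ensuring the $\ast$-path makes it impossible for a rule to be ``silently excluded'' without sharing an attribute, and checking that choosing $\ast$ at every node is what guarantees the required inconsistencies arise only through shared attributes rather than through the rule's own values.
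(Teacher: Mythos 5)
Your proof is correct. The paper itself states this lemma without proof (deferring to Lemma 7 of the cited earlier work), and your argument---following the all-$\ast$ complete path, noting that $K(\xi)$ is then consistent, and using the two ``solves'' conditions to show every rule with nonempty left-hand side must share an attribute with $A(\xi)$---is the natural and standard way to establish it; the only tiny slip is that the number of working nodes in $\xi$ is in general at least $|A(\xi)|$ rather than exactly equal to it, but that is the direction of inequality you need, so nothing breaks.
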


\begin{lemma}
\label{L3} (follows from Lemma 8 \cite{Kerven23}) Let $S$ be a decision rule
system. Then $h_{EAR}(S)\geq d(S)$.
\end{lemma}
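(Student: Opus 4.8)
The plan is to show that every extended decision tree $\Gamma$ over $S$ that solves $EAR(S)$ has depth at least $d(S)$; taking the minimum over all such trees then yields $h_{EAR}(S)\geq d(S)$. The intuition is that in order to certify correctly that a longest rule $r$ holds, the tree must query every attribute occurring in $r$ along the path traced by any input making $r$ realizable. First I would dispose of the trivial case $d(S)=0$, where the inequality holds because $h_{EAR}(S)\geq 0$ always. For $d(S)=m\geq 1$, I fix a rule $r\in S$ with $|A(r)|=m$ and write $K(r)=\{a_{i_{1}}=\delta_{1},\ldots,a_{i_{m}}=\delta_{m}\}$.

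Next I would construct a witness tuple $\bar{\delta}\in EV(S)$ that agrees with these $m$ equations on the attributes of $A(r)$ and takes arbitrary admissible values on the remaining attributes of $A(S)$. Such a tuple exists because each set $EV_{S}(a_{i})$ is nonempty (it contains $\ast$) and each $\delta_{j}\in V_{S}(a_{i_{j}})\subseteq EV_{S}(a_{i_{j}})$. Let $\xi$ be the unique complete path traced by $\bar{\delta}$ in $\Gamma$, so that $K(\xi)\subseteq K(S,\bar{\delta})$; in particular $K(\xi)$ is consistent. Since $K(r)\subseteq K(S,\bar{\delta})$, the rule $r$ is realizable for $\bar{\delta}$, and because $\Gamma$ solves $EAR(S)$ the solving condition forces $r\in\tau(\xi)$.

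The key step is then the first clause of the solving condition: as $r\in\tau(\xi)$ and $K(\xi)$ is consistent, we must have $K(r)\subseteq K(\xi)$. Every equation in $K(\xi)$ is written on an attribute of $A(\xi)$, so this inclusion forces $A(r)\subseteq A(\xi)$, whence $h(\xi)\geq|A(\xi)|\geq|A(r)|=m$. Therefore $h(\Gamma)\geq h(\xi)\geq m=d(S)$, and minimizing over all solving trees gives $h_{EAR}(S)\geq d(S)$.

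The only delicate point—the main obstacle—is justifying cleanly that the path traced by $\bar{\delta}$ carries a consistent equation system and that realizability of $r$ genuinely triggers $r\in\tau(\xi)$; once $r\in\tau(\xi)$ is in hand, the inclusion $K(r)\subseteq K(\xi)$ does all the remaining work. An alternative adversarial route would flip the value of some unqueried attribute of $r$ to produce a second tuple following the same path but with $r$ no longer realizable, contradicting that $\tau(\xi)$ is fixed along $\xi$; I prefer the direct argument via the solving condition, as it avoids building a second tuple and isolates exactly where the length $m$ enters.
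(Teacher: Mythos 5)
The paper states this lemma without proof, importing it from Lemma 8 of \cite{Kerven23}, so there is no in-paper argument to compare against. Your proof is correct and complete: the tuple $\bar{\delta}$ extending $K(r)$ by $\ast$'s lies in $EV(S)$, the traced path $\xi$ has $K(\xi)\subseteq K(S,\bar{\delta})$ and hence is consistent, the second solving condition (in contrapositive, since $K(r)\cup K(\xi)\subseteq K(S,\bar{\delta})$ is consistent) forces $r\in\tau(\xi)$, and the first condition then gives $K(r)\subseteq K(\xi)$, so $h(\xi)\geq|A(\xi)|\geq|A(r)|=d(S)$. This is the natural argument and surely the intended one; the only spot worth spelling out more explicitly is that $r\in\tau(\xi)$ is obtained from the \emph{second} clause of the solving condition, not merely from ``realizability.''
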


Let $S$ be a decision rule system and $S^{\prime }$ be the set of rules of
the length $d(S)$ from $S$. Two decision rules $r_{1}$ and $r_{2}$ from $%
S^{\prime }$ are called \emph{equivalent} if $K(r_{1})=K(r_{2})$. This
equivalence relation provides a partition of the set $S^{\prime }$ into equivalence classes.
We denote by $S^{\max }$ the set of rules that contains exactly one
representative from each equivalence class and does not contain any other
rules.

\begin{lemma}
\label{L4}  Let $S$ be a decision rule system with $n(S)>0$. Then $$h_{EAR}(S)\geq \ln
|S^{\max }|/\ln (k(S)+1).$$
\end{lemma}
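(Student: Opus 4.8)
The plan is to combine a branching (counting) upper bound on the number of complete paths of an optimal tree with an injective assignment of one complete path to each rule of $S^{\max}$.

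First, I would fix a decision tree $\Gamma$ over $S$ that solves $EAR(S)$ with $h(\Gamma)=h_{EAR}(S)$. Since every working node labeled by an attribute $a_{i}$ has exactly $|EV_{S}(a_{i})|=|V_{S}(a_{i})|+1\le k(S)+1$ leaving edges, and every complete path contains at most $h(\Gamma)$ working nodes, a routine induction on the structure of the tree bounds the number of complete paths by $|CP(\Gamma)|\le (k(S)+1)^{h(\Gamma)}$. Next, to each rule $r\in S^{\max}$ with $K(r)=\{a_{i_{1}}=\delta_{1},\ldots,a_{i_{d(S)}}=\delta_{d(S)}\}$ I associate the tuple $\bar{\delta}^{r}\in EV(S)$ that assigns the value $\delta_{j}$ to each attribute $a_{i_{j}}\in A(r)$ and the value $\ast$ to every attribute of $A(S)\setminus A(r)$. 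Routing $\bar{\delta}^{r}$ through $\Gamma$ (at each working node follow the edge labeled by the value $\bar{\delta}^{r}$ assigns to the queried attribute) yields a unique complete path $\xi^{r}$. Its equation system satisfies $K(\xi^{r})\subseteq K(S,\bar{\delta}^{r})$ and is therefore consistent. Because $r$ is realizable for $\bar{\delta}^{r}$ and both $K(r)$ and $K(\xi^{r})$ lie in the consistent set $K(S,\bar{\delta}^{r})$, their union is consistent, so the second solving condition of $\Gamma$ forbids $r\in S\setminus\tau(\xi^{r})$; hence $r\in\tau(\xi^{r})$, and the first condition gives $K(r)\subseteq K(\xi^{r})$ and $A(r)\subseteq A(\xi^{r})$.

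The main step, and the place where the hypothesis that the rules of $S^{\max}$ have maximal length $d(S)$ is essential, is to show that $r\mapsto\xi^{r}$ is injective. I would suppose $r_{1},r_{2}\in S^{\max}$ are distinct but $\xi^{r_{1}}=\xi^{r_{2}}=\xi$. Two tuples routed to the same complete path must agree on every attribute of $A(\xi)$, so $\bar{\delta}^{r_{1}}$ and $\bar{\delta}^{r_{2}}$ agree on $A(\xi)\supseteq A(r_{1})$. For any $a\in A(r_{1})$ the value $\bar{\delta}^{r_{1}}(a)$ lies in $V_{S}(a)$ and hence differs from $\ast$; agreement with $\bar{\delta}^{r_{2}}(a)$ then forces $a\in A(r_{2})$, since otherwise $\bar{\delta}^{r_{2}}(a)=\ast$. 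Thus $A(r_{1})\subseteq A(r_{2})$, and as both rules have length $d(S)$ this yields $A(r_{1})=A(r_{2})$; comparing the agreed values on this common attribute set gives $K(r_{1})=K(r_{2})$, contradicting that $r_{1}$ and $r_{2}$ represent different equivalence classes of $S^{\max}$. The map is therefore injective, so $|S^{\max}|\le|CP(\Gamma)|$.

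Combining the two bounds gives $|S^{\max}|\le (k(S)+1)^{h_{EAR}(S)}$. Since $n(S)>0$ implies $k(S)\ge 1$, the quantity $\ln(k(S)+1)$ is positive, and taking logarithms yields $h_{EAR}(S)\ge\ln|S^{\max}|/\ln(k(S)+1)$. The degenerate case $|S^{\max}|=1$, where the right-hand side is $0$, holds trivially because $h_{EAR}(S)\ge 0$. I expect the injectivity argument to be the only delicate point, the counting bound and the verification of $r\in\tau(\xi^{r})$ being direct consequences of the definitions.
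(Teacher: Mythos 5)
Your proposal is correct and follows essentially the same route as the paper: you associate to each rule $r\in S^{\max}$ the tuple that extends $K(r)$ by $\ast$'s, show these tuples force $|S^{\max}|$ distinct complete paths (the paper phrases this as $|S^{\max}|$ pairwise different solutions, hence terminal nodes), and compare with the $(k(S)+1)^{h(\Gamma)}$ counting bound. Your explicit injectivity argument is just a more detailed rendering of the paper's claim that $S^{\max}(\bar{\delta}(r))=\{r\}$.
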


\begin{proof} Let $r\in S^{\max }$ and the rule $r$ is equal to $%
(a_{i_{1}}=\delta _{1})\wedge \cdots \wedge (a_{i_{m}}=\delta
_{m})\rightarrow \sigma $. We now define a tuple $\bar{\delta}(r)\in EV(S)$.
For $j=1,\ldots ,m$, the tuple $\bar{\delta}(r)$ in the position
corresponding to the attribute $a_{i_{j}}$ contains the number $\delta _{j}$%
. All other positions of the tuple $\bar{\delta}(r)$ are filled with the
symbol $\ast $. We denote by $S^{\max }(\bar{\delta}(r))$ the set of rules
from $S^{\max }$ that are realizable for the tuple $\bar{\delta}(r)$. One
can show that $S^{\max }(\bar{\delta}(r))=\{r\}$. From here it
follows that the problem $EAR(S)$ has at least $|S^{\max }|$ pairwise
different solutions.

Let $\Gamma $ be a decision tree, which solves the problem $EAR(S)$ and for
which $h(\Gamma )=h_{EAR}(S)$. Then the number of terminal nodes in $\Gamma $
should be at least $|S^{\max }|$. It is easy to show that the number of
terminal nodes in $\Gamma $ is at most $(k(S)+1)^{h(\Gamma )}$. Therefore $%
(k(S)+1)^{h(\Gamma )}\geq |S^{\max }|$ and $h(\Gamma )\geq \ln |S^{\max
}|/\ln (k(S)+1)$. Thus, $h_{EAR}(S)\geq \ln |S^{\max }|/\ln (k(S)+1)$.
\end{proof}

\section{Algorithms\label{S4}}

In this section, we consider an auxiliary algorithm $\mathcal{A}_{greedy}$ that constructs a node cover for the hypergraph corresponding to a decision rule system and an algorithm $\mathcal{A}^{EAR}$ that describes the work of a
decision tree solving the problem $EAR(S)$ for a decision rule system $S$
with $n(S)>0$.

\subsection{Auxiliary Algorithm $\mathcal{A}_{greedy}$\label{S4.1}}

Let $S$ be a decision rule system with $n(S)>0$. First, we describe a
polynomial time algorithm $\mathcal{A}_{greedy}$ for the construction of a
node cover $B$ for the hypergraph $G(S^{\max })$ such that $\left\vert
B\right\vert \leq \beta (S^{\max })\ln |S^{\max }|+1$.\medskip

\medskip \noindent \emph{Algorithm} $\mathcal{A}_{greedy}$ \medskip

\noindent During each step, this algorithm chooses an attribute $a_{i}\in
A(S^{\max })$ with the minimum index $i$, which covers the maximum number of
rules from $S^{\max }$ uncovered during previous steps and add it to the set
$B$ (an attribute $a_{i}$ covers a rule $r\in S^{\max }$ if $a_{i}\in A(r)$%
). The algorithm will finish the work when all rules from $S^{\max }$ are
covered.\medskip

The considered algorithm is essentially a well-known greedy algorithm for the set
cover problem -- see Sect. 4.1.1 of the book \cite{MoshkovZ11}.

\begin{lemma}
\label{L5} (follows from Theorem 4.1 \cite{MoshkovZ11}) Let $S$ be a decision rule
system with $n(S)>0$. The algorithm $\mathcal{A}_{greedy}$  constructs a
node cover $B$ for the hypergraph $G(S^{\max })$ such that $\left\vert
B\right\vert \leq \beta (S^{\max })\ln |S^{\max }|+1$.\medskip
\end{lemma}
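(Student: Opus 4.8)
The plan is to recognize that finding a node cover of $G(S^{\max})$ is exactly an instance of the set cover problem and then apply the standard analysis of the greedy heuristic, which is what Theorem 4.1 of \cite{MoshkovZ11} provides. First I would make the reduction explicit: take as ground set the collection of rules $r\in S^{\max}$ with $A(r)\neq\emptyset$, and associate with each attribute $a_i\in A(S^{\max})$ the subset $\{r\in S^{\max}:a_i\in A(r)\}$ of rules it covers. Under this identification a node cover of $G(S^{\max})$ is precisely a subfamily of these attribute-subsets whose union is the whole ground set, and $\beta(S^{\max})$ is the minimum cardinality of such a cover. The algorithm $\mathcal{A}_{greedy}$, which at each step adds the attribute covering the largest number of still-uncovered rules (ties broken by least index), is then literally the greedy set cover algorithm, so invoking Theorem 4.1 of \cite{MoshkovZ11} with the substitutions ``ground set size $\leq|S^{\max}|$'' and ``optimum $=\beta(S^{\max})$'' yields the claimed bound.

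For a self-contained argument I would run the classical potential-reduction proof. Let $\beta=\beta(S^{\max})$ and let $m_t$ denote the number of rules left uncovered after $t$ steps of $\mathcal{A}_{greedy}$, so $m_0\leq|S^{\max}|$. The key step is an averaging argument: since the $m_t$ uncovered rules admit a node cover of size at most $\beta$ (the restriction of an optimal cover), by pigeonhole some single attribute covers at least $m_t/\beta$ of them, and hence so does the attribute chosen greedily. This gives the recurrence $m_{t+1}\leq m_t(1-1/\beta)$, and iterating together with $1-x\leq e^{-x}$ yields $m_t\leq m_0 e^{-t/\beta}\leq|S^{\max}|e^{-t/\beta}$.

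To close, suppose the algorithm performs $T$ steps in total, so $|B|=T$. Before the final step at least one rule is uncovered, i.e.\ $m_{T-1}\geq 1$; combined with the decay estimate this gives $1\leq|S^{\max}|e^{-(T-1)/\beta}$, and taking logarithms produces $T-1\leq\beta\ln|S^{\max}|$, that is $|B|\leq\beta(S^{\max})\ln|S^{\max}|+1$. The degenerate case $\beta=0$ (all rules of $S^{\max}$ have empty left-hand side) is handled separately: then the empty set is a node cover and $|B|=0\leq 1$.

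I expect the main obstacle to be the averaging step --- verifying that the existence of a size-$\beta$ cover of the whole instance forces, at every intermediate stage, the existence of one attribute covering a $1/\beta$ fraction of the currently uncovered rules --- together with the bookkeeping needed to land on exactly the additive constant $+1$ rather than a weaker harmonic-number bound; the clean choice of stopping just before $m_t$ drops below $1$ is what makes the constant come out sharp.
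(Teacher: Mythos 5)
Your proposal is correct and matches the paper's approach: the paper offers no proof of its own, simply citing Theorem 4.1 of \cite{MoshkovZ11} (the standard greedy set cover bound), and your argument consists of making that reduction explicit and reproducing the classical potential-reduction analysis that underlies the cited theorem. The reduction, the averaging step, the recurrence $m_{t+1}\leq m_t(1-1/\beta)$, and the stopping argument yielding the additive $+1$ are all sound.
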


\subsection{Greedy Algorithm $\mathcal{A}^{EAR}$\label{S4.2}}

Let $S$ be a decision rule system with $n(S)>0$. We now describe a
polynomial time algorithm $\mathcal{A}^{EAR}$ that, for a given tuple of
attribute values from the set $EV(S)$, describes the work on this tuple of a
decision tree $\Gamma $, which solves the problem $AER(S)$. Note that this algorithm is similar to the algorithm considered in \cite{Kerven23a}.\medskip \pagebreak

\medskip \noindent \emph{Algorithm} $\mathcal{A}^{EAR}$ \medskip

\noindent The work of the decision tree $\Gamma$ consists of rounds.

\emph{First round}. Using the algorithm $\mathcal{A}_{greedy}$, we construct
a node cover $B_{1}$ of the hypergraph $G(S^{\max })$ with $\left\vert
B_{1}\right\vert \leq \beta (S^{\max })\ln |S^{\max }|+1$. The decision tree
$\Gamma $ sequentially computes values of the attributes from $B_{1}$. As a
result, we obtain a system $\alpha _{1}$ consisting of $\left\vert
B_{1}\right\vert $ equations of the form $a_{i_{j}}=\delta _{j}$, where $%
a_{i_{j}}\in B_{1}$ and $\delta _{j}$ is the computed value of the attribute
$a_{i_{j}}$. If $S_{\alpha _{1}}=\emptyset $ or all rules from $S_{\alpha
_{1}}$ have the empty left-hand side, then the tree $\Gamma $ finishes its
work. The result of this work is the set of decision rules $r$ from $S$ for
which the system of equations $K(r)\cup \alpha _{1}$ is consistent. These rules correspond to rules from $S_{\alpha_{1}}$ with the empty left-hand side.
Otherwise, we move on to the second round of the decision tree $\Gamma $
work.

\emph{Second round}. Using the algorithm $\mathcal{A}_{greedy}$, we
construct a node cover $B_{2}$ of the hypergraph $G((S_{\alpha _{1}})^{\max
})$ with $\left\vert B_{2}\right\vert \leq \beta ((S_{\alpha _{1}})^{\max
})\ln |(S_{\alpha _{1}})^{\max }|+1$. The decision tree $\Gamma $
sequentially computes values of the attributes from $B_{2}$. As a result, we
obtain a system $\alpha _{2}$ consisting of $\left\vert B_{2}\right\vert $
equations. If $S_{\alpha _{1}\cup \alpha _{2}}=\emptyset $ or all rules from
$S_{\alpha _{1}\cup \alpha _{2}}$ have the empty left-hand side, then the
tree $\Gamma $ finishes its work. The result of this work is the set of
decision rules $r$ from $S$ for which the system of equations $K(r)\cup
\alpha _{1}\cup \alpha _{2}$ is consistent. These rules correspond to rules from $S_{\alpha _{1}\cup \alpha _{2}}$ with the empty left-hand side. Otherwise, we move on to the
third round of the decision tree $\Gamma $ work, etc., until we obtain empty
system of rules or system in which all rules have empty left-hand side.
\medskip

\begin{theorem}
\label{T1} Let $S$ be a decision rule system with $n(S)>0$. The algorithm $%
\mathcal{A}^{EAR}$ describes the work of a decision tree $\Gamma $, which
solves the problem $EAR(S)$ and for which $h(\Gamma )\leq h_{EAR}(S)^{3}\ln
(k(S)+1)+h_{EAR}(S)$.\medskip
\end{theorem}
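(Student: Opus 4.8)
The plan is to write $h(\Gamma)$ as a sum over rounds of the sizes of the covers queried along a worst-case complete path, and to bound that sum by the product of a per-round bound and a bound on the number of rounds, both expressed through $h_{EAR}(S)$. Fix an arbitrary complete path of $\Gamma$, set $S^{(1)}=S$, and for $t\geq1$ let $\alpha_t$ be the equation system produced in round $t$ along this path and $S^{(t+1)}=S_{\alpha_1\cup\cdots\cup\alpha_t}$. Because a rule of $S_\alpha$ can never retain on its left-hand side an attribute occurring in $\alpha$, the attributes queried in distinct rounds are pairwise distinct, so $\alpha_1\cup\cdots\cup\alpha_t$ is consistent, uses attributes from $A(S)$, and assigns to each of them a value from the corresponding set $EV_S(\cdot)$. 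Lemma \ref{L1} then yields $h_{EAR}(S^{(t)})\leq h_{EAR}(S)$ for every round $t$. Moreover $V_{S^{(t)}}(a_i)\subseteq V_S(a_i)$, hence $k(S^{(t)})\leq k(S)$.

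First I would bound the cover size $|B_t|$. Lemma \ref{L5} gives $|B_t|\leq\beta((S^{(t)})^{\max})\ln|(S^{(t)})^{\max}|+1$. For the first factor I would observe that $G((S^{(t)})^{\max})$ is a sub-hypergraph of $G(S^{(t)})$, so any node cover of $G(S^{(t)})$ is also one of $G((S^{(t)})^{\max})$; thus $\beta((S^{(t)})^{\max})\leq\beta(S^{(t)})$, and Lemma \ref{L2} together with $h_{EAR}(S^{(t)})\leq h_{EAR}(S)$ gives $\beta((S^{(t)})^{\max})\leq h_{EAR}(S)$. For the second factor I would apply Lemma \ref{L4} to $S^{(t)}$ (valid since $n(S^{(t)})>0$ in any executed round), obtaining $\ln|(S^{(t)})^{\max}|\leq h_{EAR}(S^{(t)})\ln(k(S^{(t)})+1)\leq h_{EAR}(S)\ln(k(S)+1)$. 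Hence
\begin{equation*}
|B_t|\leq h_{EAR}(S)^2\ln(k(S)+1)+1.
\end{equation*}

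Next I would bound the number $T$ of rounds by showing $d(S^{(t+1)})\leq d(S^{(t)})-1$. The crucial point is that equivalent rules have identical left-hand sides, hence identical attribute sets, so the cover $B_t$ of $G((S^{(t)})^{\max})$ meets $A(r)$ not only for the chosen representatives but for every rule $r$ of $S^{(t)}$ of maximal length $d(S^{(t)})$. Therefore, in passing from $S^{(t)}$ to $S^{(t+1)}=(S^{(t)})_{\alpha_t}$, every longest rule is either discarded (when $K(r)\cup\alpha_t$ is inconsistent) or has at least one equation deleted, while no rule grows longer; thus the maximal length drops by at least one each round. Consequently the process stops after at most $d(S)$ rounds, and Lemma \ref{L3} gives $T\leq d(S)\leq h_{EAR}(S)$.

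Combining the two bounds, the number of working nodes on the fixed path is
\begin{equation*}
\sum_{t=1}^{T}|B_t|\leq h_{EAR}(S)\bigl(h_{EAR}(S)^2\ln(k(S)+1)+1\bigr)=h_{EAR}(S)^3\ln(k(S)+1)+h_{EAR}(S),
\end{equation*}
and since the path was arbitrary this bounds $h(\Gamma)$. I expect the round-counting step to be the main obstacle: one must argue carefully that the greedy cover of the reduced maximal subsystem hits all maximum-length rules of $S^{(t)}$, which is exactly what forces the maximal length down by one each round and caps $T$ by $h_{EAR}(S)$. The remaining work is routine verification that the hypotheses of Lemmas \ref{L1}, \ref{L2}, and \ref{L4} transfer to the reduced systems $S^{(t)}$.
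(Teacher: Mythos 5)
Your proof is correct and follows essentially the same route as the paper's: bound the per-round cover size by $h_{EAR}(S)^{2}\ln(k(S)+1)+1$ via Lemmas \ref{L5}, \ref{L2}, \ref{L1}, and \ref{L4}, bound the number of rounds by $d(S)\leq h_{EAR}(S)$ via Lemma \ref{L3}, and multiply. The only difference is that you spell out why the maximal rule length strictly decreases each round (the cover of $G((S^{(t)})^{\max})$ hits every maximum-length rule because equivalent rules share the same attribute set), a step the paper simply declares clear.
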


\begin{proof}  It is clear that $d(S)>d(S_{\alpha _{1}})>d(S_{\alpha
_{1}\cup \alpha _{2}})>\cdots $. Therefore the number of rounds is at most $%
d(S)$. By Lemma \ref{L3}, $d(S)\leq h_{EAR}(S)$.
We now show that the number
of attributes values of which are computed by $\Gamma $ during each round is
at most $%
h_{EAR}(S)^{2}\ln (k(S)+1)+1$. We consider only the second round: the proofs for other
rounds are similar. From Lemma \ref{L5} it follows that the number of
attributes values of which are computed by $\Gamma $ during the second round
is at most $\beta ((S_{\alpha _{1}})^{\max })\ln |(S_{\alpha _{1}})^{\max
}|+1$. Evidently, $\beta ((S_{\alpha _{1}})^{\max })\leq \beta (S_{\alpha
_{1}})$. By Lemmas \ref{L1} and \ref{L2},  $\beta (S_{\alpha _{1}})\leq
h_{EAR}(S_{\alpha _{1}})\leq h_{EAR}(S)$. Therefore  $\beta ((S_{\alpha
_{1}})^{\max })\leq h_{EAR}(S)$. By Lemmas \ref{L1} and \ref{L4}, $\ln
|(S_{\alpha _{1}})^{\max }|\leq h_{EAR}(S_{\alpha _{1}})\ln (k(S)+1)\leq
h_{EAR}(S)\ln (k(S)+1)$. Therefore the number of attributes values of which
are computed by $\Gamma $ during the second round is at most $%
h_{EAR}(S)^{2}\ln (k(S)+1)+1$. This bound is true for each round.
The number
of rounds is at most $h_{EAR}(S)$. Thus, the depth of the decision tree $%
\Gamma $ is at most $h_{EAR}(S)^{3}\ln (k(S)+1)+h_{EAR}(S)$. 
\end{proof}

\section{Conclusions\label{S5}}

In this paper, we considered a new  algorithm with polynomial time complexity, which models the behavior of a decision tree solving the problem $EAR(S)$ for a given tuple of attribute values. We studied the accuracy of this algorithm:  we compared the depth of the decision tree described by it and the minimum depth of a decision tree. The obtained bound is a bit worse in the comparison with the bound  for the algorithm considered in \cite{Kerven23a}. However, we expect that these two algorithms are mutually complementary: the old one will work better for systems with short decision rules and the new one will work better for systems with long decision rules. In the future, we are planning to do computer experiments to explore this hypothesis. We are also planning to develop a dynamic programming algorithm for the minimization of the depth of decision trees and to compare experimentally the depth of decision trees constructed by the two considered algorithms with the minimum depth.

\subsection*{Acknowledgements}

Research reported in this publication was supported by King Abdullah
University of Science and Technology (KAUST).

\bibliographystyle{spmpsci}
\bibliography{1C}

\end{document}